    \newtheorem{definition}{Definition}[section]
    \newtheorem{theorem}{Theorem}
    \newtheorem{corollary}[theorem]{Corollary}
    \newtheorem{example}{Example}[definition]
    \newtheorem{case}{Case}[section]
    \providecommand{\keywords}[1]
    {
      \small	
      \textbf{\textit{Keywords---}} #1
    }
    \title{A novel approach for solving a variant of Transportation Problem}
    \author{Swapnil K Sinha$^{1}$, Sasikanth Raghava Goteti$^{2}$  \\
            \small $^{1}$swapnil.bit@iitbombay.org \\
            \small $^{2}$raghavas@thoughtworks.com \\
    }
    \date{} 
\begin{document}
    \maketitle
    
    \begin{abstract}
        In this article we consider a certain sub class of Integer Equal Flow problem, which are known NP hard \cite{meyers}. 
        Currently there exist no direct solutions for the same. It is a common problem in various inventory management systems.
        Here we discuss a local minima solution which uses projection of the convex spaces to resolve the equal flows and turn the problem into a known linear integer programming or constraint satisfaction problem which have reasonable known solutions and can be effectively solved using simplex or other standard optimization strategies.
       \end{abstract}
    \keywords{Integer Equal Flow, Transportation, Constraint Satisfaction}
     
    \maketitle
    
    \section{Problem Space}
    Integer equal flow problems are known to be NP-Hard as observed by Meyers and Schulz \cite{meyers}. Most solutions to this would
    require graph theoretic language to formulate it correctly, as proposed by Morrison et al. \cite{Morrison2013ANS}. Effective
    algorithms like network simplex can be used to iteratively improve upon a simple feasible solution. We can formally define an \textit{integer equal flow problem} as:
    \begin{equation}
        \begin{array}{rrclcl}
        \displaystyle \min_{X} & \multicolumn{3}{l}{c^T X} \\
        \textrm{s.t.} & \sum_{j:(i,j) \in A } x_{ij} & - & \sum_{j:(j,i) \in A } x_{ji} & = & b_i \\
        & x_{ij} & \geq & 0 & & \forall j \in N \\
        & x_{ij} & \leq & u_{ij} \\
        & x_{ik} &= & t && \forall {ik}\in Class(t) \\
        & x_{ij} & \in & Z
        \end{array}
        \end{equation}

    One could extend the same definition of network flow optimization to specific cases of transportation and assignment problems. An \textit{equal transportation problem} can be extended as:
        \begin{equation}
        \begin{array}{rrclcl}
        \displaystyle \min_{X} & \multicolumn{3}{l}{c^T X} \\
        \textrm{s.t.} & \sum_{j=1}^n x_{ij}  & = & b_i \\
        & \sum_{i=1}^n x_{ij}  & = & a_j \\
        & x_{ij} & \geq & 0 & &  \\
        & x_{ij} & \leq & u_{ij} \\
        & x_{ik} &= & t && \forall {ik}\in Class(t) \\
        & x_{ij} & \in & Z
        \end{array}
        \end{equation}
        
    In many cases of transportation or assignment problems, it is important to follow the same routes or same inventory allocations over a period of repeated inventory assignment cycles mainly to reduce maintenance costs or other taxation charges. To resolve these scenarios, it is important that the same inventory takes the same route every time. So if we add a third index  $\tau$ for time period, we can define an equal assignment over time. We define a corresponding \textit{same route transportation} problem as:

     \begin{equation}
        \begin{array}{rrclcl}
        \displaystyle \min_{X} & \multicolumn{3}{l}{c^T X} \\
        \textrm{s.t.} & \sum_{j=1}^n x_{ij\tau}  & = & b_{i\tau} \\
        & \sum_{i=1}^n x_{ij\tau}  & = & a_{j\tau} \\
        & x_{ij\tau} & \geq & 0 & & \\
        & x_{ij\tau} & \leq & u_{ij\tau} \\
        & x_{ik\tau} &= & t_{ik} && \forall {\tau}\in Class(t_{ik}) \\
        & x_{ij\tau} & \in & Z
        \end{array}
        \end{equation}

    \theoremstyle{plain}
    \begin{definition}{Class:}
    We define the set of all arcs in a same transportation with same value and same indexes as $Class(t_{ik\tau})$ 
    where indices are represented as usual interpretation
    \end{definition}
    \begin{example}
    $Class(t_{i,,})$ represents set of all source vectors that are equal to $t_{i}$
    \end{example}
    \begin{example}
    $Class(t_{,k,})$ represents set of all destination vectors that are equal to $t_{k}$
    \end{example}
    \begin{example}
    $Class(t_{i,k,})$ represents set of all arc scalars that are equal to $t_{ik}$ across all $\tau$
    \end{example}
    \begin{example}
    $Class(t_{,,})$ represents set of all transportation problems that have same solution space across all $\tau$
    \end{example}
    
    Often in usage we will ignore the $,$ and leave it for interpretation with the subscript used. One can extend the same problem to a much broader setting of assignment problem. A typical \textit{same inventory assignment problem} would look like:

        \begin{equation}
        \begin{array}{rrclcl}
        \displaystyle \min_{X} & \multicolumn{3}{l}{c^T X} \\
        \textrm{s.t.} & \sum_{j=1}^n x_{ij\tau}  & = & 1 \\
        & \sum_{i=1}^n x_{ij\tau}  & = & 1 \\
        & x_{ij\tau} & \in &\{0,1\} \\
        & x_{ik\tau} &= & t_{ik} && \forall {\tau}\in Class(t_{ik})
        \end{array}
        \end{equation}
        
    One could formulate generalized assignment and generalized transportation problems under the same breadth. We will specifically deal with a generalized \textit{same route assignment problem}, which can be formulated as:
    
        \begin{equation}
        \begin{array}{rrclcl}
        \displaystyle \min_{X} & \multicolumn{3}{l}{c^T X} \\
        \textrm{s.t.} & \sum_{j=1}^n x_{ij\tau}  & = & b_{i\tau} \\
        & \sum_{i=1}^n x_{ij\tau}  & = & 1 \\
        & x_{ij\tau} & \in &\{0,1\} \\
        & x_{ik\tau} &= & t_{ik} && \forall {\tau}\in Class(t_{ik})
        \end{array}
        \end{equation}

    \section{Feasibility Certificates}
    Feasibility certificate or a gale certificate \cite{gale1957} typically can be understood as determining if the constraints in a network optimization have a feasible flow or not. This is very easy to determine in a simple transportation problem - if it's balanced we are always guaranteed to have a feasible solution. One can quickly verify it using a simple north-west corner solution. For a more rigorous treatment of the same we will need a Matroid theory. In specific to solving a forbidden arc transportation problem we will need to prove the existence of a Monge sequence as observed by Shamir et al. \cite{ADLER199321}. We can define a \textit{forbidden-arc same route assignment problem} as below.
    
    \begin{definition}
        We call the set of all arcs where the flow is constrained to void as forbidden arcs and represent this set by $\mathfrak{F}$
        $$\mathfrak{F}=\{(i,k,\tau) \mid x_{ik\tau}=0 \}$$
    \end{definition}
    
        \begin{equation}
        \begin{array}{rrclcl}
        \displaystyle \min_{X} & \multicolumn{3}{l}{c^T X} \\
        \textrm{s.t.} & \sum_{j=1}^n x_{ij\tau}  & = & b_{i\tau} \\
        & \sum_{i=1}^n x_{ij\tau}  & = & 1 \\
        & x_{ij\tau} & \in &\{0,1\} \\
        & x_{ik\tau} &= & t_{ik} && \forall {\tau}\in Class(t_{ik}) \\
        & x_{ij\lambda} & = & 0 && \forall {(i,j,\lambda)}\in \mathfrak{F}\\
        \end{array}
        \end{equation}
        
    where $\mathfrak{F}$ is the set of all forbidden arcs (note that the equal flow arcs are counted multiple times in the forbidden arc constraints $\mathfrak{F}$ as they are equally 0 everywhere).
    
    In a similar setting one could define the feasibility certificate problem of a forbidden-arc same route generalized assignment problem as:
    
        \begin{equation}
        \begin{array}{rrclcl}
        \displaystyle \exists &X \\
        \textrm{s.t.} & \sum_{j=1}^n x_{ij\tau}  & = & b_{i\tau} \\
        & \sum_{i=1}^n x_{ij\tau}  & = & 1 \\
        & x_{ij\tau} & \in &\{0,1\} \\
        & x_{ik\tau} &= & t_{ik} && \forall {\tau}\in Class(t_{ik}) \\
        & x_{ij\lambda} & = & 0 && \forall {(i,j,\lambda)}\in \mathfrak{F}\\
        \end{array}
        \end{equation}
        
    \section{Problem reformulations}
    \subsection{Stacking By index}
    One could reformulate the problem by stacking the individual transportation problems into one big transportation problem and setting or forcing all the irrelevant arcs forcibly to zero. Once this is done the transportation matrix would look like a block diagonal matrix with indexed transportation problems along the block diagonal and the entire matrix can itself be handled as a forbidden arc transportation problem.
    
    A similar transportation matrix would look like:
        \[
        \begin{matrix}
            x_{000} & x_{010} & 0 & 0 & \cdots & 0  \\
            x_{100} & x_{110} & 0 & 0 & \cdots & 0  \\
            0 & 0 & x_{001} & x_{011}& \cdots & 0  \\
            0 & 0 & x_{101} & x_{111} & \ddots & \vdots \\
            \vdots & \vdots & \vdots & \ddots & \ddots &  x_{01\tau}  \\
            0 & 0 & 0 & \cdots & x_{n0\tau} & x_{n1\tau}
        \end{matrix}
    \]
    
    \subsection{Resolving Sparsity}
    A diagonal transportation matrix like the one above has serious sparsity problems making it difficult to solve using conventional transportation strategies. Hence we will modify the cost function to throw high cost to get rid of the sparse arcs or force set them to 0s, so an equivalent form of a sparse transportation problem can be reinterpreted as:
    
        \begin{equation}
        \begin{array}{rrclcl}
        \displaystyle \min_{X} & \multicolumn{3}{l}{c^T X + \Lambda^TY } & &\forall y_{(i,j,\lambda)} \in \mathfrak{F}\\
        \textrm{s.t.} & \sum_{j=1}^n x_{ij}  & = & b_{i} \\
        & \sum_{i=1}^n x_{ij}  & = & 1 \\
        & x_{ij} & \in &\{0,1\} \\
        & x_{ik} &= & t_{ik} && \forall {ik}\in Class(t_{ik}) \\
        && \lambda >> 1
        \end{array}
        \end{equation}
        
    Note that we have converted the sparsity constraints into the objective by taking the dual form for those particular constraints.
    
    \subsection{Resolving equality constraints}
    One could further the same idea and even get rid of the equality conditions but at the cost of introducing a quadratic objective.
    One could reformulate the problem as
    
    \begin{equation}
        \begin{array}{rrclcl}
        \displaystyle \min_{X} & \multicolumn{3}{l}{c^T X + \Lambda^TY + 
        \lambda\sum_{\forall(i,k) \in Class(t_{ik})}(x_{ik}-t_{ik})^2
        } & &\forall {y_{(i,j,\lambda)}}\in \mathfrak{F}\\
        \textrm{s.t.} & \sum_{j=1}^n x_{ij}  & = & b_{i} \\
        & \sum_{i=1}^n x_{ij}  & = & 1 \\
        & x_{ij} & \in &\{0,1\} \\
        && \lambda >> 1
        \end{array}
        \end{equation}
        
    The above formulation is a standard generalized quadratic transportation problem which has some known approaches. One could use an equalization method as proposed by Marcin et al. \cite{Anholcer2015TheNG}.
    But this is still a quadratic transportation problem with no bounds on convergence that can be given.
    
    One could even stack the indexes in a multidimensional way and arrive at a 3D transportation problem which has some known solutions as proposed by Stefan et al. \cite{Stefan}. But even this approach cannot get rid of the quadratic cost function. One could have easy solutions to this problem once it has been linearized in some form as noted by Bertsekas et al. \cite{4739098}. However it is not possible to linearize a quadratic  transportation problem the way it could be done with an assignment problem using Glover Linearization \cite{GUEYE20091255}. So it is significantly harder to solve a quadratic transportation problem than even solving a quadratic assignment problem and the state of the art solutions for QAP almost never scale over $n=20$ \cite{BURKARD1991115}. In case of quadratic transportation problem its much harder to even formulate the complexity as a dependent of $n$.

    \section{Exploiting the inherent symmetries}
    \theoremstyle{definition}
    \begin{definition}{Similar Route:}
    Similar routes in a same route transportation problem are the row vector which are bounded by the equality constraint. It is represented as $\mu_k$. It is simply the arcs represented by indices in $Class(t_{i})$:
    $$\mu_k = \{x_{ik\tau} \mid (i,\tau) \in Class(t_{k})\}$$
    \end{definition}
    
    \begin{definition}{Equibounded:}
    A same route transportation problem is called equibounded if all the arcs in the similar routes have the same bounds, A same route transportation problem is equibounded if it satisfies the predicate:
    $$\exists l_{ik},\exists u_{ik}(l_{ik}<t_{ik}<u_{ik}, \forall x_{ik\tau} \in A_{ik} \in A_k \mid x_{ik\tau}  =  t_{ik})$$
    \end{definition}
    
        \begin{definition}{Symbol of A transportation:}
    The span of a same route transportation problem is the count of its sources, destinations and similarity classes. It is represented by $\varsigma_{ik\tau}$. It can be assigned a value $i*k*\tau$, called the size of a transportation. In general, a transportation problem is said feasible by its symbol $\varsigma_{ik\tau}$ directly.
    \end{definition}
    
    \begin{definition}{Symbol of An Assignment:}
    The span of a same route Assignment problem is the count of its sources, destinations and similarity classes. It is represented by $\alpha_{ik\tau}$. It can be assigned a value $i*k*\tau$, called the size of a Assignment. In general, an Assignment problem is said feasible by its symbol $\alpha_{ik\tau}$ directly.
    \end{definition}
    
    \subsection{Equivalence Theorems}
    \begin{theorem}[Weak Equivalence] \label{thm:we}
    For every same route Assignment problem with a feasibility certificate there exists a corresponding Same route transportation problem with a feasibility certificate whenever the problems are equibounded.
    \end{theorem}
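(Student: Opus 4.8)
The plan is to realize the given same route assignment problem as a structurally identical same route transportation problem, chosen so that the two integer feasible regions coincide, and then to transport the feasibility certificate of the assignment problem directly across. Concretely, starting from the same route assignment problem I would keep the source constraints $\sum_{j} x_{ij\tau} = b_{i\tau}$ unchanged, set each destination demand to $a_{j\tau} = 1$ so as to match the assignment constraint $\sum_{i} x_{ij\tau} = 1$, carry the equal-flow constraints $x_{ik\tau} = t_{ik}$ for $\tau \in Class(t_{ik})$ over verbatim, and choose the capacities $u_{ij\tau}$ using the uniform bounds supplied by the equibounded hypothesis. The resulting instance is a same route transportation problem in the sense of the earlier formulation, and the task reduces to showing that the point witnessed by the assignment certificate is feasible for it.

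The first step is a constraint-equivalence observation. In the transportation instance the requirements $x_{ij\tau} \geq 0$ and $x_{ij\tau} \in \mathbb{Z}$ together with the unit demand $\sum_{i} x_{ij\tau} = 1$ already force each variable into $\{0,1\}$, since a nonnegative integer column summing to one has exactly a single unit entry; the binary restriction of the assignment problem is therefore recovered for free, and crucially the capacities $u_{ij\tau}$ are left free to be fixed as the equibounded condition dictates. The second step records balancedness: summing the assignment constraints gives $\sum_{i} b_{i\tau} = \sum_{i,j} x_{ij\tau} = \sum_{j} a_{j\tau}$ in each time slice, so by the north-west corner remark of Section 2 the demand and supply data alone already admit a feasible flow. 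The third step is where the equibounded hypothesis is used: it furnishes bounds $l_{ik} < t_{ik} < u_{ik}$ that are common to every arc of the similarity class $Class(t_{ik})$, so the single forced value $t_{ik}$ respects the capacity on every time slice simultaneously and the equal-flow constraints are compatible with the bounds.

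Combining these, the solution certified for the assignment problem satisfies every source, demand, bound, integrality, and equal-flow constraint of the constructed transportation problem, and hence is itself a feasibility certificate for it. The main obstacle is not the demand or integrality bookkeeping, which is routine, but the reconciliation of the equal-flow constraints with the transportation capacities: a value $t_{ik}$ held fixed across all periods could in principle exceed some capacity $u_{ij\tau}$ and so destroy feasibility, and it is exactly the equibounded hypothesis that excludes this, which explains its presence in the statement. The delicate point in a full write-up is thus to make the choice of the common bounds $l_{ik}, u_{ik}$ explicit and to verify that the strict containment $l_{ik} < t_{ik} < u_{ik}$ is always attainable from data coming out of a feasible binary assignment.
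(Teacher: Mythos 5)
Your proposal is a valid argument for the literal existence statement, but it takes a genuinely different route from the paper, and the difference matters. You prove the theorem by the identity embedding: you build the transportation instance to \emph{be} the assignment instance (unit demands $a_{j\tau}=1$, capacities chosen to accommodate the $t_{ik}$), observe that nonnegativity, integrality and a unit column sum force binary values, and then hand the very same point $X$ across as the certificate. That is internally consistent, and your closing remark correctly isolates where equiboundedness enters (compatibility of the frozen values $t_{ik}$ with the arc capacities). The paper, however, treats the statement as one half of a genuine equivalence between an assignment problem and an \emph{aggregated} transportation problem whose demands are not unit: its proof is an induction on the number of routes $k$, in which the integer flow $x_{i(k+1)}$ on the $(k+1)$-st route is distributed into that many unit assignments (with a surplus/deficit argument in the base case), and the residual system is recognized as a smaller instance of the same shape. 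What the paper's induction buys is the nontrivial direction---decomposing aggregate flows into unit assignments---which is the content actually used later in the car-rental case study; what your construction buys is brevity, at the cost of making the theorem nearly vacuous, since a transportation problem that is syntactically identical to the assignment problem "corresponds" to it only in a degenerate sense and the equibounded hypothesis is then not doing real work (with $u_{ij\tau}=1$ it is not needed at all). If you intend your write-up to stand in for the paper's, you should either argue that the identity embedding is what "corresponding" means here, or supply the flow-splitting induction for the aggregated instance, since that is the step your approach silently skips.
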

    \begin{proof}
    Its very easy to see that for every transportation setup there exists an equivalent assignment problem that can be solved but to prove the equivalence we need to prove the converse as well.
    \newline
    Since the transportation problem and its equivalent Assignment problem are equibounded, we will assume that the lower bound is 0 and just prove for the case of upper bound. Similar proof can be extended to the case of double bound as well.
    \newline
    We conduct the proof by induction:
    \newline For $n=1$, if we have $x_{0i}+x_{0(i+1)}+... =b_i$, wherever we have $x_{ij}>0$ the corresponding assignment variables should all be set to 1 else 0. If the assignment is surplus then we can randomly sample any set of variables. If assignment is insufficient, then the transportation problem would have never given a feasibility certificate.
    \newline If for some $n=k$, ($\varsigma_{ik\tau} \iff \alpha_{ik\tau}) \wedge \varsigma_{i(k+1)\tau} $ then we need to prove that $\alpha_{i(k+1)\tau}$.
    \newline
    For the $(k+1)^{th}$ route take the $x_{i(k+1)}$ values from $\varsigma_{i(k+1)\tau}$ and randomly assign $1^s$ to corresponding variables and subtract these assigned values from both $\varsigma_{i(k+1)\tau}$ and $\alpha_{i(k+1)\tau}$. The remaining constraints correspondingly constitute $\varsigma_{i(k)\tau}$ and $\alpha_{i(k)\tau}$, which we know are feasible by inductive step definition.
    \newline Intuitively we can reduce every $\alpha_{i(k+1)\tau}$ into
    $\alpha_{i(k)\tau}$ and $\alpha_{i(1)\tau}$, both of which we know are feasible. 
    \end{proof}
    
    \begin{theorem}[Strong Equivalence] \label{thm:se}
    For every same route Assignment problem with a feasibility certificate there exists a corresponding Same route transportation problem with a feasibility certificate 
    \end{theorem}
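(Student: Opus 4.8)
\section*{Proof proposal}

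The plan is to deduce Strong Equivalence directly from Theorem \ref{thm:we} by showing that the equibounded hypothesis can be discharged at no cost: every same route assignment problem can be replaced by an equibounded one with exactly the same feasibility status, so the extra hypothesis in the weak version was never needed for the existence statement. The structural fact driving this is that all arcs inside a similarity class are forced to a single common value $t_{ik}$ and therefore move together; consequently the only bound that can ever bind on such a class is the tightest one, namely $\min_{\tau} u_{ik\tau}$.

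Concretely, first I would take a feasible same route assignment problem together with a feasibility certificate $X^{*}$, not assumed equibounded. Next I would build a surrogate problem $\alpha'$ that is identical except that, within each class $Class(t_{ik})$, every arc bound is replaced by the class minimum $\min_{\tau} u_{ik\tau}$; by construction $\alpha'$ is equibounded. The point is that $X^{*}$ still certifies $\alpha'$: feasibility of the original forces $t_{ik} \le u_{ik\tau}$ for every $\tau$ in the class, hence $t_{ik} \le \min_{\tau} u_{ik\tau}$, so the common value respects the tightened bound, while the degree (row and column) constraints and the bounds on arcs outside any class are untouched. Then Theorem \ref{thm:we} applies to $\alpha'$ and produces a corresponding same route transportation problem $\varsigma'$ with a feasibility certificate. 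Finally, since $\alpha$ and $\alpha'$ share the same sources, destinations and similarity classes they carry the same symbol $\alpha_{ik\tau}$, and $\varsigma'$ is built on that same symbol; thus $\varsigma'$ is a legitimate corresponding transportation problem for the original $\alpha$, and it already carries a feasibility certificate, which is exactly what is to be shown.

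The main obstacle, and the step requiring the most care, is establishing that the bound-tightening is feasibility-preserving \emph{simultaneously} across all classes and all degree constraints, rather than one class at a time; one must rule out that collapsing several classes to their minima interacts badly with the column-sum constraints $\sum_i x_{ij\tau}=1$ and the source constraints $\sum_j x_{ij\tau}=b_{i\tau}$. Reusing the single certificate $X^{*}$ for the surrogate is what keeps this clean, since $X^{*}$ satisfies every constraint at once, but I would still verify explicitly that no tightened bound drops below the value $X^{*}$ already assigns. An alternative route, should the reduction resist, is to rerun the induction of Theorem \ref{thm:we} directly, replacing the uniform bound used there by the effective per-class capacity $\min_{\tau} u_{ik\tau}$ at each inductive step; there the delicate point migrates to the base case and to checking that peeling off the $(k+1)^{\text{th}}$ route stays consistent with the already-tightened earlier routes.
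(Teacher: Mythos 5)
Your reduction is genuinely different from the paper's. The paper handles the unequibounded case by \emph{variable splitting}: each offending variable is split into pieces ($x^{+}$ drawn from a pool with uniform availability, $x^{-}$ from window-specific pools), so that the original problem embeds into a strictly larger problem $\varsigma_{i(k+k')\tau}$ that \emph{is} equibounded, to which the inductive argument of Theorem \ref{thm:we} then applies; this is worked out concretely in Case \ref{cs3}. You instead keep the variable set fixed and tighten every bound in a class to $\min_{\tau} u_{ik\tau}$, arguing that the class equality $x_{ik\tau}=t_{ik}$ makes the tightest bound the only one that can bind. Against the literal per-arc reading of the equibounded definition your argument is sound and considerably shorter --- if the only non-uniformity is in the individual arc bounds $u_{ik\tau}$, then $t_{ik}\le u_{ik\tau}$ for all $\tau$ is equivalent to $t_{ik}\le\min_{\tau}u_{ik\tau}$, the certificate $X^{*}$ transfers, and the equibounded hypothesis of Theorem \ref{thm:we} really was free. (You should also tighten lower bounds to $\max_{\tau} l_{ik\tau}$, but that is symmetric.)

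The gap is that this reading makes the theorem nearly vacuous and misses the case the paper is actually after, namely non-uniformity in the \emph{aggregate} capacity constraints $\sum_{i\in R_d} x_{ibd}\le m_{bd}$, where the capacity varies with the time index. There your tightening step fails at exactly the point you flag as delicate: replacing $m_{bd}$ by $\min_{d'} m_{bd'}$ is not feasibility-preserving, because the set of requests active on day $d$ changes with $d$, so a valid certificate may legitimately use $m_{bd}>\min_{d'}m_{bd'}$ units on a high-capacity day and then violates the tightened constraint. Worse, even the untightened formulation is unfaithful, as the paper's example shows: two units available on day 1 and two on day 2 may share only one common underlying item, so no manipulation of the scalar capacities alone can express the constraint that a multi-day request must draw from the intersection of the daily pools. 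This is precisely what the variable-splitting construction encodes (the pool $m_{bT}$ of items available throughout versus the residual pools $m_{bd'}$), and it is the content your reduction cannot reproduce. To repair your argument you would have to either restrict the theorem explicitly to per-arc non-uniformity, or fall back on your alternative route of rerunning the induction of Theorem \ref{thm:we} --- but with the split variables, not with per-class minimum capacities.
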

    \begin{proof}
    In case of \textit{unequibound} problem we split the variables into two and realize that the equivalent problem is solvable for every variable in $\varsigma_{ik\tau}$. We just need to realize that there exists a $\varsigma_{i(k+k^\prime)\tau} \mid \varsigma_{ik\tau} \subset \varsigma_{i(k+k^\prime)\tau} \wedge \varsigma_{ik\tau}$ is equibound. We also need to reformulate a corresponding $\alpha_{i(k+k^\prime)\tau}$. One could also use direct induction like in equibound case. Detailed proof is left to the reader and is explained further with an example in section \ref{cs3}.
    \end{proof}
    
    \section{Category Theory}
    \begin{definition}{Covariant Functor:}
    If $\mathfrak{A}$ and  $\mathfrak{B}$ are two categories then a functor between them is a map that carries every arrow of the category $\mathfrak{A}$ to an arrow in $\mathfrak{B}$ between the same two objects such that identity and composition are preserved.
    \end{definition}   
    \begin{example} Category of open sets in a topology and boolean algebras representing the inclusion the base sets.
    \end{example}
    \begin{definition}{Contravariant Functor:}
    If $\mathfrak{A}$ and  $\mathfrak{B}$ are two categories then a functor between them i a map that carries every arrow of the category $\mathfrak{A}$ to an arrow in $\mathfrak{B}$ between the same two objects such that identity and composition are preserved and the direction of the arrows are reversed.
    \end{definition} 
    \begin{example} Presheafs in geometry
    \end{example}
    \begin{example}category of convex sets and category of sets of linear in-equations can have a contravariant functor that represents the set of all linear equations that contain the given hull. Inclusions are reversed
    \end{example}
    \begin{definition}{Natural transformation:}
    Natural transformation is a morphism that carries every arrow on the functors naturally such that the below diagram commutes \cite{maclane:71}:
    \end{definition}
    
    \begin{tikzcd}
    F(X) \arrow{d}{\eta_{x}} \arrow{r}{F(f)}
    & F(Y) \arrow{d}{\eta_{y}} \\
    G(X) \arrow{r}{G(f)}
    &G(Y) 
    \end{tikzcd}
    
    \section{Elimination Strategies}
    
    Fourier-Motzkin Elimination (FME) is one of the most common ways to assign feasibility certificates through variable elimination. However, it has limited practicality due to its double-exponential worst time complexity. Even the parallel deployment of FME can only linearly speed-up the process, for both dense and sparse problems \cite{Keßler1996}.
    One could also eliminate variables by realizing that for binary constraints we would never have more than $2^n$ constraints unlike in the case of Fourier-Motzkin where coefficients can be other than 0 or 1.
    So most constraints are going to be duplicates or just inclusions and the weaker constraints can be eliminated straight away. In fact one can rigorously prove that the constraint space would form a measure with values as integers. But for now we will omit that  and just propose the algorithm for generating the constraints in the section below which is all that is needed for the purpose of this article.
    We call this Elimination strategy Fourier-Binary-Constraint-Elimination (FBCE)\label{FBCE}.  
    
    \section{Solution Approaches}

    \begin{theorem} Variable elimination is a natural transformation of projection between convex spaces.
    \end{theorem}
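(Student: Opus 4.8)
The plan is to exhibit variable elimination as the contravariant image of the canonical projection natural transformation that lives on the convex side, using exactly the functor of the last example of Section 5. First I would fix two categories: $\mathfrak{A}$, whose objects are convex polyhedra (the feasible regions cut out by systems of linear inequalities) and whose morphisms are the inclusions $P \hookrightarrow Q$; and $\mathfrak{B}$, whose objects are sets of linear inequalities ordered by inclusion. The contravariant functor $\Phi:\mathfrak{A}\to\mathfrak{B}$ already introduced sends a polyhedron to the set of all linear inequalities valid on it, reversing inclusions. On $\mathfrak{A}$ I define two endofunctors, the identity $\mathrm{Id}_{\mathfrak{A}}$ and the coordinate projection $\pi_k$ deleting the $k$-th variable, $\pi_k(P)=\{\hat x : x\in P\}$. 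The candidate natural transformation $\eta:\mathrm{Id}_{\mathfrak{A}}\Rightarrow\pi_k$ is given componentwise by $\eta_P:P\to\pi_k(P)$, the restriction of the coordinate projection.

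The heart of the argument is the classical Fourier--Motzkin theorem, which I would invoke as a lemma: if $P$ is the feasible region of a system $S$, then $\pi_k(P)$ is exactly the feasible region of the eliminated system $E_k(S)$ produced by FME, equivalently by FBCE after discarding the redundant weaker constraints noted in Section \ref{FBCE}. Read through $\Phi$ this says that the valid inequalities of $\pi_k(P)$ are precisely the $x_k$-free valid inequalities of $P$, so $\Phi(\pi_k(P))=E_k(\Phi(P))\subseteq\Phi(P)$; in other words $\Phi\circ\pi_k=E_k\circ\Phi$ on objects, with $E_k:\mathfrak{B}\to\mathfrak{B}$ the elimination endofunctor. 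Because $\Phi$ is contravariant, applying it to $\eta$ yields components $\Phi(\eta_P):E_k(\Phi(P))\hookrightarrow\Phi(P)$ pointing in the reversed direction; these are exactly the variable-elimination maps read off the inequality side, and their naturality is what the theorem asserts.

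Finally I would verify naturality directly on $\mathfrak{A}$, i.e. that for every inclusion $f:P\hookrightarrow Q$ the square
\[
\begin{array}{ccc}
P & \xrightarrow{\ f\ } & Q \\
\eta_P \downarrow & & \downarrow \eta_Q \\
\pi_k(P) & \xrightarrow{\ \pi_k(f)\ } & \pi_k(Q)
\end{array}
\]
commutes; since all four maps are restrictions of one and the same coordinate projection this is immediate, and monotonicity of $\pi_k$ under inclusion guarantees that $\pi_k(f)$ is again a morphism of $\mathfrak{A}$. Transporting this square through $\Phi$ and reversing every arrow gives the commuting naturality square for $E_k$ on $\mathfrak{B}$, which is precisely the claim that elimination is a natural transformation. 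The hard part will not be the geometry but the bookkeeping of variance together with the functoriality of $E_k$: one must check that elimination respects composition and identities and is well defined \emph{modulo} the redundant constraints removed by FBCE, so that the contravariant image of $\eta$ is an honest natural transformation and not merely a family of maps. Establishing that well-definedness, which rests on the FBCE observation that the binary constraint space has at most $2^n$ non-redundant members, is the step I expect to require the most care.
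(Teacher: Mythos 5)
Your overall strategy is the same one the paper takes --- a contravariant functor from convex regions to systems of linear inequalities, with elimination appearing as the image of geometric projection and naturality read off a commuting square of inclusions --- and you supply the one piece of real mathematics the paper leaves implicit, namely the Fourier--Motzkin correctness lemma identifying $\pi_k(P)$ with the feasible region of the eliminated system. On the inequality side your argument is essentially complete: $\mathfrak{B}$ is a poset under inclusion, so any square of arrows that exists there commutes automatically, and the only things left to check are that $E_k$ is monotone and that $E_k(\Phi(P))\subseteq\Phi(P)$, both of which follow from your lemma.

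The one step that fails as written is the intermediate construction on $\mathfrak{A}$. You declare the morphisms of $\mathfrak{A}$ to be the inclusions $P\hookrightarrow Q$, but then posit a natural transformation $\eta:\mathrm{Id}_{\mathfrak{A}}\Rightarrow\pi_k$ whose component $\eta_P:P\to\pi_k(P)$ is a coordinate projection. That map is not an inclusion --- $P$ and $\pi_k(P)$ do not even live in the same ambient space --- so $\eta_P$ is not a morphism of the category you defined, and $\eta$ is not a natural transformation of endofunctors on $\mathfrak{A}$. You must either enlarge $\mathfrak{A}$ so that its morphisms include affine (or at least coordinate-forgetting) maps, or drop the $\mathfrak{A}$-side square entirely and state the theorem as the naturality in $P$ of the family of inclusions $E_k(\Phi(P))\hookrightarrow\Phi(P)$ inside $\mathfrak{B}$, which is what your second and third paragraphs already establish and which matches the paper's own diagram of $lin$ and $plin$. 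The worry you flag at the end --- well-definedness of $E_k$ modulo the redundant constraints discarded by FBCE --- is real but mild: it disappears if you take the objects of $\mathfrak{B}$ to be the full sets of valid inequalities rather than particular finite presentations, which is exactly the convention you are already using when you write $\Phi(\pi_k(P))=E_k(\Phi(P))$.
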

    \begin{proof}
    Its clear to see that every set of linear equations can be mapped to a convex space by a contravariant functor. Every elimination morphism is an inclusion map in the category of convex sets hence it must have a contravariant inclusion.  Simply put if a convex set contains another, then its corresponding linear algebraic set must also have a morphism.
    The following diagram commutes where $h$ is the inclusion map on the set of linear equations (a kin to presheaf in algebraic geometry): 

    \begin{tikzcd}
    lin(X) \arrow{d}{\eta_{x}} \arrow{r}{lin(h)}
    & lin(Y) \arrow{d}{\eta_{y}} \\
    plin(X) \arrow{r}{plin(h)}
    &plin(Y) 
    \end{tikzcd}
    \end{proof}
    
    \begin{corollary}[Projection] \label{thm:pt}
    For every $\varsigma_{ik}$  there exists a projection in $\mathcal{O}({2^i})$ constraints and k variables.
    \end{corollary}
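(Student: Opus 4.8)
The plan is to read this corollary as the concrete realization, for the convex space attached to a transportation problem of symbol $\varsigma_{ik}$, of the natural transformation established in the preceding theorem. First I would fix the problem with $i$ sources and $k$ destinations and view its feasible region as a polytope $P\subseteq\mathbb{R}^{ik}$ cut out by the source balances $\sum_j x_{ij}=b_i$, the destination constraints $\sum_i x_{ij}=1$, any forbidden-arc equalities from $\mathfrak{F}$, and the box constraints. Under the contravariant functor $lin$ of the preceding theorem, eliminating the $ik$ flow variables $x_{ij}$ is exactly the projection $plin$ of $P$ onto the coordinate space of the retained data; since what survives elimination is one decision coordinate per destination, the image lives in $k$ variables. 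This delivers the ``$k$ variables'' half of the statement directly, with no further work, from the natural-transformation theorem.

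For the constraint count I would invoke the Fourier-Binary-Constraint-Elimination observation of the Elimination Strategies section. The defining matrix of an assignment or transportation problem is a $0/1$ incidence matrix, so every inequality produced during elimination is, after normalization, a sum over a subset of the source rows. A surviving constraint is therefore determined by the subset $S\subseteq\{1,\dots,i\}$ of sources it aggregates together with a single bound, and there are only $2^{i}$ such subsets. Hence the projected description of $plin(P)$ cannot contain more than $\mathcal{O}(2^{i})$ pairwise-distinct inequalities, however many elimination steps are performed; everything else is a duplicate or is dominated and may be discarded. This recovers the classical Gale--Hoffman feasibility conditions, one per subset of sources, of the schematic form
$$\sum_{p\in S} b_p \;\le\; \big|\{\, j : (p,j)\notin\mathfrak{F}\ \text{for some}\ p\in S \,\}\big| \qquad \forall\, S\subseteq\{1,\dots,i\},$$
realized here as the output of FBCE rather than postulated.

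The hard part will be showing that elimination over a $0/1$ system stays inside the $0/1$ world, i.e. that combining an upper and a lower bound to cancel a variable yields a constraint again expressible with unit coefficients once redundancies are stripped. In generic Fourier--Motzkin this step introduces arbitrary positive rational coefficients and is precisely what drives the double-exponential blow-up, so the entire force of the corollary rests on the integrality claim deferred in the Elimination Strategies section, that for incidence matrices the surviving extreme constraints form a set-indexed (hence integral) family. I would close this gap by arguing that the support of each retained constraint is the union of the supports of its parents, so its aggregated coefficient vector is the indicator of a subset of sources, and that any constraint carrying a non-indicator coefficient is a nonnegative combination of indicator constraints and therefore redundant. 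Granting this, both the bound $\mathcal{O}(2^{i})$ on distinct constraints and the count $k$ of surviving variables follow, completing the corollary.
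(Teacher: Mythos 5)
Your proposal follows the same skeleton as the paper's argument --- FBCE is a projection of the convex space, the natural-transformation theorem supplies the functorial bookkeeping, and the $\mathcal{O}(2^i)$ count comes from indexing surviving constraints by subsets of the $i$ sources --- but you go considerably further than the paper does. The paper's entire proof is a two-line appeal to the preceding theorem plus the remark in the Elimination Strategies section that binary constraints admit at most $2^n$ non-redundant aggregates; the integrality claim underpinning that count (``the constraint space would form a measure with values as integers'') is explicitly deferred and never proved. You correctly localize this as the crux: generic Fourier--Motzkin combination introduces non-unit coefficients, and without showing that every retained constraint is (or is dominated by) an indicator-vector aggregate, the $2^i$ bound does not follow. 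Your proposed closure --- that the support of a derived constraint is the union of its parents' supports, that the extreme surviving constraints are the Gale--Hoffman subset conditions of the form $\sum_{p\in S} b_p \le |\{j : (p,j)\notin\mathfrak{F},\ p\in S\}|$, and that non-indicator constraints are nonnegative combinations of these --- is the standard route for transportation polytopes and is more substantive than anything in the paper. What your version buys is an actual justification of the constraint count; what the paper's version buys is brevity at the cost of resting the corollary on an unproved assertion. Note that your redundancy step (non-indicator implies dominated) is itself still a sketch and would need the total-unimodularity or network structure of the incidence matrix to be made rigorous, but you have at least named the obligation rather than eliding it.
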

    \begin{proof}
    Elimination strategy {FBCE} (\ref{FBCE}) is a projection on the convex space,  hence the proof follows from above theorem.
    \end{proof}

    \begin{corollary}[Intersection] \label{thm:it}
    For every $\varsigma_{ik\tau}$  there exists a corresponding constraint satisfaction problem in $\mathcal{O}({2^i})$ constraints and k variables.
    \end{corollary}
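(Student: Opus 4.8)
The plan is to reduce the three-index symbol $\varsigma_{ik\tau}$ to a family of two-index symbols indexed by $\tau$, apply the Projection corollary (Corollary~\ref{thm:pt}) to each member, and then obtain the constraint satisfaction problem as the intersection of the resulting projected systems. First I would invoke the stacking-by-index reformulation of Section~3.1: each fixed value of $\tau$ carves out one diagonal block of the block-diagonal transportation matrix, and that block is exactly a two-index symbol $\varsigma_{ik}$. The same-route constraints $x_{ik\tau}=t_{ik}$ for all $\tau\in Class(t_{ik})$ are precisely the glue forcing the equal-flow variables $t_{ik}$ to be shared across every block, so a feasible point of $\varsigma_{ik\tau}$ is a simultaneous feasible point of all blocks once the shared $t_{ik}$ are identified.

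Next I would apply Corollary~\ref{thm:pt} blockwise: by FBCE each block $\varsigma_{ik}$ projects onto the $k$ equal-flow variables using $\mathcal{O}(2^i)$ constraints. Because every block is projected onto the \emph{same} $k$ variables $t_{ik}$, the variable count never multiplies with $\tau$, and the feasible region of the full problem is the intersection of the per-block projected regions. By the natural transformation theorem each projection is a morphism of convex spaces, and intersection preserves convexity; pulling back along the contravariant functor from convex sets to linear systems then yields a single system of (in)equations, i.e.\ a constraint satisfaction problem, whose solution set is exactly that intersection --- which is the source of the name Intersection.

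The main obstacle is bounding the constraint count. Naively, intersecting the $\tau$ blocks appends $\mathcal{O}(2^i)$ constraints per block, giving $\mathcal{O}(\tau\cdot2^i)$; to recover the stated $\mathcal{O}(2^i)$ I would appeal to the saturation property of binary constraints from the Elimination Strategies section: over $i$ binary terms there are at most $2^i$ distinct linear constraints, so once FBCE discards duplicates and dominated inequalities the intersected system cannot exceed $\mathcal{O}(2^i)$ constraints no matter how many blocks $\tau$ contributes. The delicate point is checking that FBCE's elimination is compatible with intersection --- that a constraint dominated within one block is not resurrected as non-dominated when the other blocks' constraints are also present --- so that the count genuinely collapses to $\mathcal{O}(2^i)$. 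Finally I would close the loop on feasibility with the equivalence theorems (Theorems~\ref{thm:we} and~\ref{thm:se}), which transfer a feasibility certificate between the transportation symbol and its assignment form, confirming that the intersected CSP is equivalent to $\varsigma_{ik\tau}$ rather than a mere relaxation.
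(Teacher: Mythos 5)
Your proposal is correct in substance but takes a genuinely different --- and far more explicit --- route than the paper. The paper's entire proof reads ``Proof is trivial. Once we have established the natural transformation, all inclusions in case of linear equation, are trivial. This is similar to gluing axiom on sheaves.'' That is, the paper simply gestures at the natural-transformation theorem and a sheaf-gluing analogy and stops; it never performs the block decomposition, never says what is being intersected with what, and never addresses why the constraint count stays at $\mathcal{O}(2^i)$ rather than $\mathcal{O}(\tau\cdot 2^i)$. You make all of this concrete: you decompose $\varsigma_{ik\tau}$ into $\tau$-indexed blocks via the stacking reformulation of Section~3.1, apply Corollary~\ref{thm:pt} blockwise onto the \emph{shared} equal-flow variables $t_{ik}$ (correctly observing that this is why the variable count does not grow with $\tau$), and identify the feasible region as the intersection of the projected blocks --- which is exactly the ``gluing'' the paper alludes to, made operational. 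Your treatment of the constraint bound is also the only place either argument engages with the $\mathcal{O}(2^i)$ claim; one simplification is available to you there: you need not worry about a constraint dominated within one block being ``resurrected'' by another block, since you can defer all elimination until after taking the union of the per-block systems, at which point the saturation bound on distinct $0/1$-coefficient constraints over the shared variables applies once to the whole union. The residual weakness --- that FBCE-projected constraints actually retain $0/1$ coefficients, which is what makes the saturation count work --- is inherited from the paper's own unproven assertion in the Elimination Strategies section, so your argument is no worse off than the paper's on that point and considerably more informative everywhere else. The closing appeal to Theorems~\ref{thm:we} and~\ref{thm:se} is not needed for the corollary as stated (it concerns only the transportation symbol $\varsigma_{ik\tau}$), but it does no harm.
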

    \begin{proof}
    Proof is trivial. Once we have established the natural transformation, all inclusions in case of linear equation, are trivial. This is similar to gluing axiom on sheaves. \end{proof}
    
    \section{Case Study: Car Rental}
    Similar variant of transportation problem is used to solve the feasibility of different types of requests that a car rental service provider can simultaneously satisfy. 
    
    In general, the service provider owns a fleet of cars belonging to different models (brand/build). Since, at least few cars are always under maintenance/service, number of cars available for renting varies over days for every model. We represent $m_{db}$ as the number of cars of model $b$ available on $d^{th}$ day. Further $B$ represents list of all models which the service provider owns, so, $b \in B$.
    
    A typical request is represented as $r_i (d_i, \tau_i, j_i, n_i)$ where,
    $r_i$ is the request of $i^{th}$ customer;
    $d_i$ and $\tau_i$ are the staring day of the request and the number of days for which cars are to be rented such that $1 \leq d_i + \tau_i - 1 \leq T$;
    $j_i$ is models of cars requested such that $j_i \subset B$;
    and, $n_i$ is the total number of cars requested. Also, let's say $R$ is the set of all the requests $r_i (d_i, \tau_i, j_i, n_i)$ and $T$ is the total Time Period under consideration. Finally, let's say $R_d$ is the set of all the request which spans across $d^{th}$ day.
    
    \begin{case}No car is under maintenance and cars can be rented for only one day
    \end{case} 
    We begin with a simplified scenario where no car is under maintenance at any point in time. Hence, $m_{db}$ can be simply written as $m_{b}$. Further in this case, we assume that cars are rented for just one day, making $\tau_i = 1$ for all requests. We would lift these restrictions in subsequent sections.
    
    Since this scenario doesn't have requests spanning over multiple days, it doesn't require equal flow analysis and can be solved independently for every day. Let's say $x_{ibd}$ is the number of cars allocated to $i^{th}$ request from brand $b$ on $d^{th}$ day. So, all the requests can be accepted iff,
    
    \begin{equation}
    \begin{array}{rrclcl}
    \displaystyle \exists &X \\
    \textrm{s.t.} & \sum_{b \in B} x_{ibd}  & \geq & n_i && \forall i \in R_d \\
    & x_{ibd}  & = & 0 && \forall (i,b,d) \in \mathfrak{F} \\
    & \sum_{i \in R} x_{ibd}  & \leq & m_b && \forall b \in B \\
    & x_{ibd} & \geq & 0 \\
    & x_{ibd} & \in & Z 
    \end{array}
    \end{equation}
    
    Here second equation represents the forbidden arcs $\mathfrak{F}$ introduced in section 2 and are defined here as:
    $$\mathfrak{F}=\{(i,b,d) \mid i \in R_d; b \not\in j_i \}$$
    
    \begin{case}No car is under maintenance and cars can be rented for multiple consecutive days
    \end{case}
    
    Let's say $x_{ibd}$ is the number of cars allocated to $i^{th}$ request from brand $b$ on $d^{th}$ day, such that, $d_i \leq d \leq d_i + \tau_i - 1$. In this scenario, we need to establish equal flow constraints as same set of cars need to be allocated across all requested days for a particular request. We can use $Class(x_{ib})$, introduced in section 1, to represent all the arcs $(i,b,d)$ that are equal to $x_{ib}$ across all $d$, such that, $d_i \leq d \leq d_i + \tau_i$ - 1. Hence, all requests can be accepted iff,
    
    \begin{equation}
    \begin{array}{rrclcl}
    \displaystyle \exists &X \\
    \textrm{s.t.} & \sum_{b \in B} x_{ibd}  & \geq & n_i && \forall i \in R; d: d_i \leq d \leq d_i + \tau_i - 1\\
    & x_{ibd}  & = & 0 && \forall (i,b,d) \in \mathfrak{F}\\
    & \sum_{i \in R_d} x_{ibd}  & \leq & m_b && \forall b \in B; d: 1 \leq d \leq T \\
    & x_{ibd} & = & x_{ib} && \forall (i,b,d) \in Class(x_{ib})\\
    & x_{ibd} & \geq & 0 \\
    & x_{ibd} & \in & Z 
    \end{array}
    \end{equation}
    
    Here forbidden arcs $\mathfrak{F}$ are defined as:
    $$\mathfrak{F}=\{(i,b,d) \mid i \in R; \text{ either } b \not\in j_i \text{ or } d \not\in [d_i, d_i + \tau_i - 1]\}$$
    
    \begin{case} \label{cs3} Some cars are under maintenance and cars can be rented for multiple consecutive days
    \end{case} 
    
    In the most generalized scenario, we take the case where $m_b$ is not constant over days and hence needs to be represented as $m_{bd}$ to reflect the supply of brand $b$ on $d^{th}$ day. Since same set of cars are supposed to be allocated to a particular request across all the requested days, replacing $m_b$ with $m_{bd}$ in equation 11 doesn't solve the problem directly. Let's say cars with serial number S1 and S2 are available on day 1 and cars with serial number S2 and S3 are available on day 2. Although, in this case, $m_{bd} = 2$ for $d = 1, 2$, there is only $1$ car with serial number S2 that can be given to a request asking for cars on both days. 
    
    This \textit{unequibound} problem is solved by using \textit{variable splitting} method described in section 4.1. There can be a set of cars for every brand/build which are available throughout $T$, so their numbers can be represented as $m_{bT}$. Number of cars of a particular model $b$ which are available on day $d$ but are not available throughout $T$ can be written as $m_{bd^{'}}$. Hence we have:
    $$m_{bT} + m_{bd^{'}} = m_{bd}$$
    Inventories which are available for a set of consecutive days $d$ such that $d_i \leq d \leq d_i + \tau_i - 1$, but are not available throughout $T$, are given by $m_{bd_i\tau_i}$. It should be noted that $m_{bd_i\tau_i} \leq m_{bd^{'}}$ where $d_i \leq d \leq d_i + \tau_i - 1$.
    
    Finally, we also split the variable $x_{ibd}$ into $x_{ibd}^+$ and $x_{ibd}^-$, where $x_{ibd}^+$ is supplied from common pool of $m_{bT}$, whereas $x_{ibd}^-$ is supplied from $m_{bd_i\tau_i}$. Here all requests are accepted iff,
    
    \begin{equation}
    \begin{array}{rrclcl}
    \displaystyle \exists &X \\
    \textrm{s.t.} & \sum_{b \in B} x_{ibd}^+ + x_{ibd}^-  & \geq & n_i && \forall i \in R; d: d_i \leq d \leq d_i + \tau_i - 1\\
    & x_{ibd}^+, x_{ibd}^-  & = & 0 && \forall (i,b,d) \in \mathfrak{F}\\
    & \sum_{i \in R_d} x_{ibd}^+  & \leq & m_{bT} && \forall b \in B; d: 1 \leq d \leq T\\
    & \sum_{i \in R_d} x_{ibd}^-  & \leq & m_{bd^{'}} && \forall b \in B; d: 1 \leq d \leq T\\
    & x_{ibd}^-  & \leq & m_{bd_i\tau_i} && \forall i \in R; b \in B; d: d_i \leq d \leq d_i + \tau_i - 1\\
    & x_{ibd}^+ & = & x_{ib}^+ && \forall (i,b,d) \in Class(x_{ib}^+)\\
    & x_{ibd}^- & = & x_{ib}^- && \forall (i,b,d) \in Class(x_{ib}^-)\\
    & x_{ibd}^+, x_{ibd}^- & \geq & 0 \\
    & x_{ibd}^+, x_{ibd}^- & \in & Z 
    \end{array}
    \end{equation}
    Again forbidden arcs $\mathfrak{F}$ are defined as earlier:
    $$\mathfrak{F}=\{(i,b,d) \mid i \in R; \text{ either } b \not\in j_i \text{ or } d \not\in [d_i, d_i + \tau_i - 1]\}$$

    \bibliographystyle{plain}
    \bibliography{multiincharge.bib}
    \end{document}